\documentclass[11pt]{article}
\usepackage{pifont,fullpage,xspace,epsfig, wrapfig,times}
\usepackage[small,it]{caption}
\usepackage{amsmath, amsthm, amssymb,times,paralist}
\usepackage{latexsym}
\usepackage{subfigure}
\usepackage[numbers,longnamesfirst]{natbib}
\usepackage{hyperref}
\usepackage{algorithm,algorithmic}

\usepackage{graphicx}
\newtheorem{theorem}{Theorem}[section]
\newtheorem{lemma}[theorem]{Lemma}
\newtheorem{definition}{Definition}
\newtheorem{claim}{Claim}
\newtheorem{proposition}[theorem]{Proposition}
\newenvironment{remark}{\smallskip\noindent {\emph{\bf Remark:}}}{\smallskip}

\makeatletter
\newcommand\footnoteref[1]{\protected@xdef\@thefnmark{\ref{#1}}\@footnotemark}
\makeatother
\def \y {\mathbf y}

\def \e {\mathbf e}
\def \x {\mathbf x}

\def \u {\mathbf u}
\def \z {\mathbf z}
\def \v {\mathbf v}

\def \R {\mathbb{R}}
\def \N {\mathbb{N}}

\def \XXX {\mathcal{X}}
\def \HHH {\mathcal{H}}
\def \AAA {\mathcal{A}}
\newenvironment{CompactEnumerate}{
\begin{list}{\alph{enumi})}{%
\usecounter{enumi}
\setlength{\leftmargin}{12pt}
\setlength{\itemindent}{3pt}
\setlength{\topsep}{3pt}
\setlength{\itemsep}{1pt}
}}
{\end{list}}
\newenvironment{CompactEn}{
\begin{list}{\arabic{enumi})}{%
\usecounter{enumi}
\setlength{\leftmargin}{12pt}
\setlength{\itemindent}{3pt}
\setlength{\topsep}{3pt}
\setlength{\itemsep}{1pt}
}}
{\end{list}}

\def \N {\mathbb{N}}
\def \R {\mathbb{R}}

\def \E {\mathbb{E}}

\def \etc {,\ldots,}

\newcommand{\norm}[1]{\left \| #1 \right \|}

\addtolength{\textheight}{.35in}
\addtolength{\textwidth}{.3in}
\addtolength{\oddsidemargin}{-.15in}
\addtolength{\evensidemargin}{-.15in}
\addtolength{\topmargin}{-.05in}

\begin{document}
\title{\Large Spectral Norm of Random Kernel Matrices with Applications to Privacy}
\author{Shiva Kasiviswanathan\thanks{Samsung Research America, \texttt{kasivisw@gmail.com}. Part of the work done while the author was at General Electric Research.} \and \ \ \ \ Mark Rudelson\thanks{University of Michigan, \texttt{rudelson@umich.edu}.}  
}
\date{}
\maketitle
\begin{abstract}

Kernel methods are an extremely popular set of techniques used for many important machine learning and data analysis applications. In addition to having good practical performance, these methods are supported by a well-developed theory. Kernel methods use an implicit mapping of the input data into a high dimensional feature space defined by a kernel function, i.e., a function returning the inner product between the images of two data points in the feature space. Central to any kernel method is the kernel matrix, which is built by evaluating the kernel function on a given sample dataset. 

In this paper, we initiate the study of non-asymptotic spectral theory of random kernel matrices. These are $n \times n$ random matrices whose $(i,j)$th entry is obtained by evaluating the kernel function on $\x_i$ and $\x_j$, where $\x_1,\dots,\x_n$ are a set of $n$ independent random high-dimensional vectors. Our main contribution is to obtain tight upper bounds on the spectral norm (largest eigenvalue) of random kernel matrices constructed by commonly used kernel functions based on polynomials and Gaussian radial basis. 

As an application of these results, we provide lower bounds on the distortion needed for releasing the coefficients of kernel ridge regression under attribute privacy, a general privacy notion which captures a large class of privacy definitions. Kernel ridge regression is standard method for performing non-parametric regression that regularly outperforms traditional regression approaches in various domains. Our privacy distortion lower bounds are the first for any kernel technique, and our analysis assumes realistic scenarios for the input, unlike all previous lower bounds for other release problems which only hold under very restrictive input settings.   
\end{abstract}

\section{Introduction}
In recent years there has been significant progress in the development and application of kernel methods for many practical machine learning and data analysis problems. Kernel methods are regularly used for a range of problems such as classification (binary/multiclass), regression, ranking, and unsupervised learning, where they are known to almost always outperform ``traditional'' statistical techniques~\cite{scholkopf2001learning,shawe2004kernel}. At the heart of kernel methods is the notion of {\em kernel function}, which is a real-valued function of two variables. The power of kernel methods stems from the fact for every (positive definite) kernel function it is possible to define an inner-product and a lifting (which could be nonlinear) such that inner-product between any two lifted datapoints can be quickly computed using the kernel function evaluated at those two datapoints.  This allows for introduction of nonlinearity into the traditional optimization problems (such as Ridge Regression, Support Vector Machines, Principal Component Analysis) without unduly complicating them. 

The main ingredient of any kernel method is the {\em kernel matrix}, which is built using the kernel function, evaluated at given sample points. Formally, given a kernel function $\kappa: \XXX \times \XXX \rightarrow \R$ and a sample set $\x_1,\dots,\x_n$, the kernel matrix $K$ is an $n \times n$ matrix with its $(i,j)$th entry $K_{ij}=\kappa(\x_i,\x_j)$. Common choices of kernel functions include the polynomial kernel ($\kappa(\x_i,\x_j) = (a \langle \x_i,\x_j \rangle +b)^p$, for $p \in \N$) and the Gaussian kernel  ($\kappa(\x_i,\x_j) = \exp(-a  \| \x_i - \x_j  \|^2)$, for $a > 0$)~\cite{scholkopf2001learning,shawe2004kernel}. 

In this paper, we initiate the study of non-asymptotic spectral properties of {\em random kernel matrices}. A random kernel matrix, for a kernel function $\kappa$, is the kernel matrix $K$ formed by $n$ independent random vectors $\x_1,\dots,\x_n \in \R^d$. The prior work on random kernel matrices~\cite{karoui2010spectrum,cheng2013spectrum,do2013spectrum} have established various interesting properties of the spectral distributions of these matrices in the asymptotic sense (as $n, d \rightarrow \infty$). However, analyzing algorithms based on kernel methods typically requires understanding of the spectral properties of these random kernel matrices for {\em large, but fixed} $n, d$. A similar parallel also holds in the study of the spectral properties of ``traditional'' random matrices, where recent developments in the non-asymptotic theory of random matrices have complemented the classical random matrix theory that was mostly focused on asymptotic spectral properties~\cite{V11,rudelson2014recent}.

We investigate upper bounds on the largest eigenvalue (spectral norm) of random kernel matrices for polynomial and Gaussian kernels. We show that for inputs $\x_1,\dots,\x_n$ drawn independently from a wide class of probability distributions over $\R^d$ (satisfying the subgaussian property), the spectral norm of a random kernel matrix constructed using a polynomial kernel of degree $p$, with high probability, is roughly bounded by $O(d^p n)$. In a similar setting, we show that the spectral norm of a random kernel matrix constructed using a Gaussian kernel is bounded by $O(n)$, and with high probability,  this bound reduces to $O(1)$ under some stronger assumptions on the subgaussian distributions. These bounds are almost tight. Since the entries of a random kernel matrix are highly correlated, the existing techniques prevalent in random matrix theory can not be directly applied. We overcome this problem by careful splitting and conditioning arguments on the random kernel matrix. Combining these with subgaussian norm concentrations form the basis of our proofs.

\paragraph{Applications.} Largest eigenvalue of kernel matrices plays an important role in the analysis of many machine learning algorithms. Some examples include, bounding the Rademacher complexity for multiple kernel learning~\cite{lanckriet2004learning}, analyzing the convergence rate of conjugate gradient technique for matrix-valued kernel learning~\cite{sindhwani2012scalable}, and establishing the concentration bounds for eigenvalues of kernel matrices~\cite{jia2009accurate,shawe2005eigenspectrum}. 

In this paper, we focus on an application of these eigenvalue bounds to an important problem arising while analyzing sensitive data. Consider a curator who manages a database of sensitive information but wants to release statistics about how a {\em sensitive} attribute (say, disease) in the database relates with some {\em nonsensitive} attributes (e.g., postal code, age, gender, etc). This setting is widely considered in the applied data privacy literature, partly since it arises with medical and retail data.  Ridge regression is a well-known approach for solving these problems due to its good generalization performance. Kernel ridge regression is a powerful technique for building nonlinear regression models that operate by combining ridge regression with kernel methods~\cite{saunders1998ridge}.\!\footnote{We provide a brief coverage of the basics of kernel ridge regression in Section~\ref{sec:private}.} We present a {\em linear reconstruction attack}\footnote{In a linear reconstruction attack, given the released information $\rho$, the attacker constructs a system of approximate linear equalities of the form $A\z \approx \rho$ for a matrix $A$ and attempts to solve for $\z$.} that reconstructs, with high probability, almost all the sensitive attribute entries given sufficiently accurate approximation of the kernel ridge regression coefficients.  We consider reconstruction attacks against {\em attribute privacy}, a loose notion of privacy, where the goal is to just avoid any gross violation of privacy. Concretely, the input is assumed to be a database whose $i$th row (record for individual $i$) is $(\x_i,y_i)$ where $\x_i \in \R^d$ is assumed to be known to the attacker (public information) and $y_i \in \{0,1\}$ is the sensitive attribute, and a privacy mechanism is {\em attribute non-private} if the attacker can consistently reconstruct a large fraction of the sensitive attribute ($y_1,\dots,y_n$). We show that any privacy mechanism that always adds $\approx o(1/(d^p n))$ noise\footnote{\label{note1}Ignoring the dependence on other parameters, including the regularization parameter of ridge regression.} to each coefficient of a polynomial kernel ridge regression model is attribute non-private. Similarly any privacy mechanism that always adds $\approx o(1)$ noise\footnoteref{note1} to each coefficient of a Gaussian kernel ridge regression model is attribute non-private. As we later discuss, there exists natural settings of inputs under which these kernel ridge regression coefficients, even without the privacy constraint, have the same magnitude as these noise bounds, implying that privacy comes at a steep price. While the linear reconstruction attacks employed in this paper themselves are well-known~\cite{DY08,KRSU10,KRS13}, these are the first attribute privacy lower bounds that: \renewcommand{\labelenumi}{(\roman{enumi})}\begin{inparaenum}  \item are applicable to any kernel method and \item work for any $d$-dimensional data, analyses of all previous attacks (for other release problems) require $d$ to be comparable to $n$. \end{inparaenum} Additionally, unlike previous reconstruction attack analyses, our bounds hold for a wide class of realistic distributional assumptions on the data. 

\subsection{Comparison with Related Work} \label{sec:related}
In this paper, we study the largest eigenvalue of an $n \times n$ random kernel matrix in the non-asymptotic sense. The general goal with studying non-asymptotic theory of random matrices is to understand the spectral properties of random matrices, which are valid with high probability for matrices of a large fixed size. This is contrast with the existing theory on random kernel matrices which have focused on the asymptotics of various spectral characteristics of these random matrices, when the dimensions of the matrices tend to infinity. Let $\x_1,\dots,\x_n \in \R^d$ be $n$ i.i.d. random vectors. For any $F : \R^d \times \R^d \times \R \rightarrow \R$, symmetric in the first two variables, consider the random kernel matrix $K$ with $(i,j)$th entry $K_{ij} = F(\x_i,\x_j,d)$. El Karoui~\cite{karoui2010spectrum} considered the case where $K$ is generated by either the {\em inner-product kernels} (i.e., $F(\x_i,\x_j,d) = f(\langle \x_i,\x_j \rangle,d)$) or the {\em distance kernels} (i.e., $F(\x_i,\x_j,d) = f(\| \x_i - \x_j \|^2,d)$). It was shown there that under some assumptions on $f$ and on the distributions of $\x_i$'s, and in the ``large $d$, large $n$'' limit (i.e., and $d, n \rightarrow \infty$ and $d/n \rightarrow (0,\infty)$): a)  the non-linear kernel matrix converges asymptotically in spectral norm to a linear kernel matrix, and b) there is a weak convergence of the limiting spectral density. These results were recently strengthened in different directions by Cheng~\emph{et al.}\ \cite{cheng2013spectrum} and Do~\emph{et al.}\ \cite{do2013spectrum}. To the best of our knowledge, ours is the first paper investigating the non-asymptotic spectral properties of a random kernel matrix. 

Like the development of non-asymptotic theory of traditional random matrices has found multitude of applications in areas including statistics, geometric functional analysis, and compressed sensing~\cite{V11}, we believe that the growth of a non-asymptotic theory of random kernel matrices will help in better understanding of many machine learning applications that utilize kernel techniques. 

The goal of \emph{private data analysis} is to release global, statistical properties of a database while protecting the privacy of the individuals whose information the database contains. Differential privacy~\cite{DMNS06} is a formal notion of privacy tailored to private data analysis. Differential privacy requires, roughly, that any single individual's data have little effect on the outcome of the analysis. A lot of recent research has gone in developing differentially private algorithms for various applications, including kernel methods~\cite{jain2013differentially}. A typical objective here is to release as accurate an approximation as possible to some function $f$ evaluated on a database $D$.

In this paper, we follow a complementary line of work that seeks to understand how much distortion (noise) is necessary to privately release some particular function $f$ evaluated on a database containing sensitive information~\cite{DiNi03,DMT07,DY08,KRSU10,De11,merener2011polynomial,Choromanski:2012:PDA:2213556.2213570,MuthuN12,KRS13}. The general idea here, is to provide {\em reconstruction attacks}, which are attacks that can reconstruct (almost all of) the sensitive part of database $D$ given sufficiently accurate approximations to $f(D)$. Reconstruction attacks violate any {\em reasonable} notion of privacy (including, differential privacy), and the existence of these attacks directly translate into lower bounds on distortion needed for privacy.

Linear reconstruction attacks were first considered in the context of data privacy by Dinur and Nissim~\cite{DiNi03}, who showed that any mechanism which answers $\approx n \log n$ random inner product queries on a database in $\{0,1\}^n$ with $o(\sqrt{n})$ noise per query is not private. Their attack was subsequently extended in various directions by~\cite{DMT07,DY08,merener2011polynomial,Choromanski:2012:PDA:2213556.2213570}.

The results that are closest to our work are the attribute privacy lower bounds analyzed for releasing $k$-way marginals~\cite{KRSU10,De11}, linear/logistic regression parameters~\cite{KRS13}, and a subclass of statistical $M$-estimators~\cite{KRS13}. Kasiviswanathan~\emph{et al.}\ \cite{KRSU10} showed that,  if $d = \tilde{\Omega}(n^{1/(k-1)})$, then any mechanism which releases all $k$-way marginal tables with $o(\sqrt{n})$ noise per entry is attribute non-private.\!\footnote{The $\tilde{\Omega}$ notation hides polylogarithmic factors.} These noise bounds were improved by De~\cite{De11}, who presented an attack that can tolerate a constant fraction of entries with arbitrarily high noise, as long as the remaining entries have $o(\sqrt{n})$ noise. Kasiviswanathan~\emph{et al.}\ \cite{KRS13} recently showed that, if $d=\Omega(n)$, then any mechanism which releases $d$ different linear or logistic regression estimators each with $o(1/\sqrt{n})$ noise is attribute non-private. They also showed that this lower bound extends to a subclass of statistical $M$-estimator release problems. 
A point to observe is that in all the above referenced results, $d$ has to be comparable to $n$, and this dependency looks unavoidable in those results due to their use of least singular value bounds. However, in this paper, our privacy lower bounds hold for all values of $d, n$ ($d$ could be $\ll n$). Additionally, all the previous reconstruction attack analyses critically require the $\x_i$'s to be drawn from product of univariate subgaussian distributions, whereas our analysis here holds for any $d$-dimensional subgaussian distributions (not necessarily product distributions), thereby is more widely applicable. The subgaussian assumption on the input data is quite common in the analysis of machine learning algorithms~\cite{bousquet2004advanced}.

\section{Preliminaries}
\paragraph{Notation.} We use $[n]$ to denote the set $\{1 \etc n\}$. $d_H(\cdot,\cdot)$ measures the Hamming distance. Vectors used in the paper are by default column vectors and are denoted by boldface letters. For a vector $\v$, $\v^\top$ denotes its transpose and $\|\v\|$ denotes its Euclidean norm. For two vectors $\v_1$ and $\v_2$, $\langle \v_1, \v_2 \rangle$ denotes the inner product of $\v_1$ and $\v_2$. For a matrix $M$, $\|M\|$ denotes its spectral norm, $\|M\|_F$ denotes its Frobenius norm, and $M_{ij}$ denotes its $(i,j)$th entry. $\mathbb{I}_n$ represents the identity matrix in dimension $n$. 
The unit sphere in $d$ dimensions centered at origin is denoted by $S^{d-1} = \{\z \,:\, \|\z\| =1, \z \in \R^d \}$. Throughout this paper $C,c,C'$, also with subscripts, denote absolute constants (i.e., independent of $d$ and $n$), whose value may change from line to line.

\subsection{Background on Kernel Methods}
We provide a very brief introduction to the theory of kernel methods; see the many books on the topic~\cite{scholkopf2001learning,shawe2004kernel} for further details.
\begin{definition} [Kernel Function] \label{defn:mercer}
Let $\XXX$ be a non-empty set. Then a function $\kappa: \XXX \times \XXX \rightarrow \R$ is called a kernel function on $\XXX$ if there exists a Hilbert space $\HHH$ over $\R$ and a map $\phi: \XXX \rightarrow \HHH$ such that for all $\x,\y \in \XXX$, we have
$$\kappa(\x,\y) = \langle \phi(\x),\phi(\y) \rangle_{\HHH}.$$
\end{definition}

For any {\em symmetric and positive semidefinite}\footnote{A positive definite kernel is a function $\kappa : \XXX \times \XXX \rightarrow \R$ such that for any $n \geq 1$, for any finite set of points $\{\x_i\}_{i=1}^n$ in $\XXX$ and real numbers $\{a_i\}_{i=1}^n$, we have $\sum_{i,j =1}^n a_i a_j \kappa(\x_i,\x_j) \geq 0$.}  kernel $\kappa$, by Mercer's theorem~\cite{mercer1909functions} there exists: \begin{inparaenum} \item a unique functional Hilbert space $\HHH$ (referred to as the reproducing kernel Hilbert space, Definition~\ref{defn:RKHS}) on $\XXX$ such that  $\kappa(\cdot,\cdot)$ is the inner product in the space and \item a map $\phi$ defined as $\phi(\x) := \kappa(\cdot,\x)$\footnote{$\kappa(\cdot,\x)$ is a vector with entries $\kappa(\x',\x)$ for all $\x' \in  \XXX$.} that satisfies Definition~\ref{defn:mercer}\end{inparaenum}. The function $\phi$ is called the {\em feature map} and the space $\HHH$ is called the {\em feature space}.

\begin{definition} [Reproducing Kernel Hilbert Space] \label{defn:RKHS}
A kernel $\kappa(\cdot,\cdot)$ is a reproducing kernel of a Hilbert space $\HHH$ if $\forall f \in \HHH$, $f(\x) = \langle \kappa(\cdot,\x), f(\cdot) \rangle_\HHH$. For a (compact) $\XXX \subseteq \R^d$, and a Hilbert space $\HHH$ of functions $f : \XXX \rightarrow \R$, we say $\HHH$ is a Reproducing Kernel Hilbert Space if there $\exists \kappa : \XXX \times \XXX \rightarrow \R$, s.t.: a) $\kappa$ has the reproducing property, and b) $\kappa$ spans $\HHH = \overline{\mbox{span}\{\kappa(\cdot, \x) : \x \in \XXX\}}$.
\end{definition}

A standard idea used in the machine-learning community (commonly referred to as the ``kernel trick'') is that kernels allow for the computation of inner-products in high-dimensional feature spaces ($\langle \phi(\x),\phi(\y) \rangle_{\HHH}$) using simple functions defined on pairs of input patterns ($\kappa(\x,\y)$), without knowing the $\phi$ mapping explicitly. This trick allows one to efficiently solve a variety of non-linear optimization problems. Note that there is no restriction on the dimension of the feature maps ($\phi(\x)$), i.e., it could be of infinite dimension.

Polynomial and Gaussian are two popular kernel functions that are used in many machine learning and data mining tasks such as classification, regression, ranking, and structured prediction. Let the input space $\XXX =\R^d$. For $\x, \y \in \R^d$, these kernels are defined as:

\newcommand{\resref}[1]{{\bf (\ref{res:#1})}}
\begin{list}{{\bf (\arabic{enumi})}}{\usecounter{enumi}
\setlength{\leftmargin}{\parindent}
\setlength{\listparindent}{\parindent}
\setlength{\parsep}{0pt}}
\item \label{res:poly} {\bf Polynomial Kernel}: $\kappa(\x,\y) = (a \langle \x,\y \rangle + b)^p$, with parameters $a,b \in \R$ and $p \in \N$. Here $a$ is referred to as the slope parameter, $b \geq 0$ trades off the influence of higher-order versus lower-order terms in the polynomial, and $p$ is the polynomial degree. For an input $\x \in \R^d$, the feature map $\phi(\x)$ of the polynomial kernel is a vector with a polynomial in $d$ number of dimensions~\cite{scholkopf2001learning}.
\item \label{res:rbf} {\bf Gaussian Kernel} (also frequently referred to as the {\em radial basis kernel}):  $\kappa(\x,\y) = \exp \left ( - a \| \x -\y \|^2 \right )$ with real parameter $a > 0$. The value of $a$ controls the locality of the kernel with low values indicating that the influence  of a single point is ``far'' and vice-versa~\cite{scholkopf2001learning}. An equivalent popular formulation, is to set $a = 1/2\sigma^2$, and hence, $\kappa(\x,\y) = \exp \left ( - \| \x -\y \|^2/2\sigma^2 \right )$.   For an input $\x \in \R^d$, the feature map $\phi(\x)$ of the Gaussian kernel is a vector of infinite dimensions~\cite{scholkopf2001learning}. Note that while we focus on the Gaussian kernel in this paper, the extension of our results to other exponential kernels such as the {\em Laplacian kernel} (where $\kappa(\x,\y) = \exp \left ( - a \| \x -\y \|_1 \right )$), is quite straightforward.
\end{list}

\subsection{Background on Subgaussian Random Variables} 
Let us start by formally defining subgaussian random variables and vectors.
\begin{definition} [Subgaussian Random Variable and Vector]\label{def:subgauss} 
We call a random variable $x \in \R$ subgaussian if there exists  a constant $C > 0$ if $\Pr[ |x| > t]  \leq 2 \exp(-t^2/C^2)$ for all $t > 0$. We say that a random vector $\x \in \R^d$ is subgaussian if the one-dimensional marginals $\langle \x,\y \rangle$ are subgaussian random variables for all $\y \in \R^d$.
\end{definition}
The class of subgaussian random variables includes many random variables that arise naturally in data analysis, such as standard normal, Bernoulli, spherical, bounded (where the random variable $x$ satisfies $| x | \leq M$ {\em almost surely} for some fixed $M$). The natural generalization of these random variables to higher dimension are all subgaussian random vectors. For many {\em isotropic convex sets}\footnote{A convex set $\mathcal{K}$  in $\R^d$ is called isotropic if a random vector chosen uniformly from $\mathcal{K}$ according to the volume is isotropic. A random vector $\x \in \R^d$ is isotropic if for all $\y \in \R^d$, $\E [ \langle \x,\y \rangle^2] = \| \y \|^2.$}
$\mathcal{K}$ (such as the hypercube), a random vector $\x$ uniformly distributed in $\mathcal{K}$ is subgaussian.

\begin{definition}[Norm of Subgaussian Random Variable and Vector]
The $\psi_2$-norm of a subgaussian random variable $x \in \R$, denoted by $\| x \|_{\psi_2}$ is:
$$ \| x \|_{\psi_2} = \inf \left \{ t > 0 \,:\, \E[\exp(|x|^2/t^2)] \leq 2 \right \}. $$
The $\psi_2$-norm of a subgaussian random vector $\x \in \R^d$ is:
$$ \| \x \|_{\psi_2} = \sup_{\y \in S^{d-1}} \; \| \langle \x,\y \rangle \|_{\psi_2}.$$
\end{definition}

\begin{claim}[Vershynin~\cite{V11}] \label{claim:norm}
Let $\x \in \R^d$ be a subgaussian random vector. Then there exists a constant $C > 0$, such that $\Pr [ | \x | > t]  \leq 2\exp(-C t^2/ \| \x \|^2_{\psi_2})$.
\end{claim}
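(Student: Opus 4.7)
The plan is to reduce the event in question to one-dimensional marginals (which is where the definition of $\|\x\|_{\psi_2}$ lives) and then use a standard $\epsilon$-net argument on the sphere to pass from marginals to the full Euclidean norm of $\x$.

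Set $K := \|\x\|_{\psi_2}$. First I would establish the scalar case: if $X$ is a real random variable with $\|X\|_{\psi_2} \leq K$, then by the defining inequality $\E[\exp(X^2/K^2)] \leq 2$ and Markov's inequality applied to the nonnegative random variable $\exp(X^2/K^2)$ at level $\exp(t^2/K^2)$, one obtains
\[
\Pr[|X| > t] = \Pr\!\left[\exp(X^2/K^2) > \exp(t^2/K^2)\right] \leq 2\exp(-t^2/K^2).
\]
This is the one-dimensional subgaussian tail, and it applies to every marginal $\langle \x, \y\rangle$ with $\y \in S^{d-1}$, because the vector norm $\|\x\|_{\psi_2}$ is defined as the supremum of $\|\langle\x,\y\rangle\|_{\psi_2}$ over such $\y$.

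Next, I would lift this to the Euclidean norm via the variational identity $|\x| = \sup_{\y \in S^{d-1}} \langle \x, \y\rangle$. Fix a $(1/2)$-net $\mathcal{N} \subset S^{d-1}$ of cardinality at most $5^d$ (standard volumetric covering). A routine approximation lemma gives $|\x| \leq 2\max_{\y \in \mathcal{N}} |\langle \x, \y\rangle|$. A union bound over $\mathcal{N}$ combined with the scalar tail above yields
\[
\Pr[|\x| > t] \;\leq\; |\mathcal{N}| \cdot 2\exp\!\left(-t^2/(4K^2)\right) \;\leq\; 2\exp\!\left(d\log 5 - t^2/(4K^2)\right).
\]
Absorbing $d\log 5$ into the exponent by reducing the constant in front of $t^2/K^2$ gives the claimed bound $2\exp(-Ct^2/K^2)$ for an absolute constant $C > 0$, once $t$ is at least of order $\sqrt{d}\,K$; for smaller $t$ the bound is vacuous because its right-hand side is already $\geq 1$.

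The main obstacle is the discretization step: the factor $5^d$ arising from the net must be absorbed into the Gaussian exponent without letting $C$ depend on $d$, which forces one to pay careful attention to the regime of $t$ in which the inequality is informative. Everything else is clean consequences of Markov's inequality and the definition of the $\psi_2$-norm for a random vector.
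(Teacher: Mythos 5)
Your first paragraph --- Markov's inequality applied to $\exp(X^2/K^2)$ --- is the standard argument and is in fact all that this claim requires: the paper cites it from Vershynin without proof, and it only ever invokes it for scalars. In the proof of Theorem~\ref{thm:poly} the claim is applied to $\langle \x_i,\x_j\rangle$ conditioned on $\x_j$, i.e., to a one-dimensional marginal, which is exactly where the definition of $\|\x\|_{\psi_2}$ for a vector lives. So the quantity $|\x|$ in Claim~\ref{claim:norm} should be read as the absolute value of a subgaussian random variable (a marginal $\langle\x,\y\rangle$), not as the Euclidean norm $\|\x\|$.

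The second half of your argument, which upgrades the bound to the Euclidean norm with a dimension-free constant, has a genuine error in the absorption step. You dismiss the regime $t \le c\sqrt{d}\,K$ as vacuous, but $2\exp(-Ct^2/K^2)\ge 1$ only when $t\le K\sqrt{(\log 2)/C}$, i.e., for $t=O(K)$. In the intermediate range where $t$ is much larger than $K$ but much smaller than $\sqrt{d}\,K$, your net bound $2\cdot 5^{d}\exp(-t^2/(4K^2))$ exceeds $1$ while the claimed bound is exponentially small, and no choice of absolute constant $C$ can close this gap, because the Euclidean-norm version of the statement is false there: for a standard Gaussian vector in $\R^d$ one has $\|\x\|_{\psi_2}=O(1)$, yet $\Pr[\|\x\|>\sqrt{d}/2]$ is close to $1$ whereas $2\exp(-Cd/4)$ is exponentially small. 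The correct norm statement at the only meaningful scale $t\asymp\sqrt{d}$ is Lemma~\ref{lem:normx}, with probability bound $\exp(-C'd)$, and that is precisely where the paper deploys the $(1/2)$-net, union bound, and Claim~\ref{claim:net} argument you describe. In short: your first paragraph proves Claim~\ref{claim:norm} as it is actually used; the net step should be dropped here (it reproves Lemma~\ref{lem:normx}, not this claim).
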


Consider a subset $T$ of $\R^d$, and let $\epsilon > 0$. An $\epsilon$-net of $T$ is a subset $\mathcal{N} \subseteq T$ such that for every $\x \in T$, there exists a $\z \in \mathcal{N}$ such that $\|\x - \z \|   \leq \epsilon$. We would use the following well-known result about the size of $\epsilon$-nets.

\begin{proposition}[Bounding the size of an $\epsilon$-Net~\cite{V11}]~\label{prop:nets}
Let $T$ be a subset of $S^{d-1}$ and let $\epsilon > 0$. Then there exists an $\epsilon$-net of $T$ of cardinality at most $(1+2/\epsilon)^d$.
\end{proposition}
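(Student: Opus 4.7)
The plan is to use a standard volumetric packing argument: construct a maximal $\epsilon$-separated subset of $T$, show that it must already be an $\epsilon$-net, and then bound its cardinality by comparing volumes of disjoint balls sitting inside a slightly enlarged ball around the origin.

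First I would let $\mathcal{N} \subseteq T$ be a maximal subset with the property that $\|\x - \y\| > \epsilon$ for every distinct pair $\x, \y \in \mathcal{N}$. Such a maximal set exists (e.g., by Zorn's lemma, or simply by greedy construction when $T$ is finite; for infinite $T$ one can argue in the metric setting). The first key observation is that maximality forces $\mathcal{N}$ to be an $\epsilon$-net: if some $\x \in T$ were at distance $> \epsilon$ from every $\z \in \mathcal{N}$, then $\mathcal{N} \cup \{\x\}$ would still be $\epsilon$-separated, contradicting maximality. So it suffices to bound $|\mathcal{N}|$.

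Next I would use a volume comparison in $\R^d$. Consider the open Euclidean balls $B(\z, \epsilon/2)$ for $\z \in \mathcal{N}$. Because any two distinct centers are at distance strictly greater than $\epsilon$, these balls are pairwise disjoint. Moreover, since every $\z \in \mathcal{N} \subseteq S^{d-1}$ has norm $1$, each such ball is contained in the larger centered ball $B(\mathbf{0}, 1+\epsilon/2)$. Letting $V_d$ denote the volume of the unit Euclidean ball in $\R^d$, disjointness and containment give
\[
|\mathcal{N}| \cdot (\epsilon/2)^d V_d \;=\; \sum_{\z \in \mathcal{N}} \vol\!\bigl(B(\z, \epsilon/2)\bigr) \;\leq\; \vol\!\bigl(B(\mathbf{0}, 1+\epsilon/2)\bigr) \;=\; (1+\epsilon/2)^d V_d.
\]
Cancelling $V_d$ and dividing through by $(\epsilon/2)^d$ yields $|\mathcal{N}| \leq (1 + 2/\epsilon)^d$, as required.

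There is no real obstacle here; the only minor point to be careful about is the strict-versus-non-strict inequality in the definition of $\epsilon$-separation, which is why I phrased separation with a strict $>\epsilon$ (guaranteeing that $B(\z,\epsilon/2)$ are genuinely disjoint) while the $\epsilon$-net covering condition uses $\leq \epsilon$ (guaranteed by maximality). The argument works uniformly for any $T \subseteq S^{d-1}$ and gives the claimed bound $(1 + 2/\epsilon)^d$, which is the cardinality bound stated in the proposition.
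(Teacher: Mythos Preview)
Your argument is correct and is precisely the standard volumetric packing argument that Vershynin gives in~\cite{V11}. Note that the paper itself does not supply a proof of this proposition; it is simply quoted from~\cite{V11} as background, so there is nothing further to compare.
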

The proof of the following claim follows by standard techniques.
\begin{claim} [~\cite{V11}]  \label{claim:net}
Let $\mathcal{N}$ be a $1/2$-net of $S^{d-1}$. Then for any $\x \in \R^d$, $\| \x \| \leq 2 \max_{\y \in \mathcal{N}} \; \langle \x, \y \rangle$.
\end{claim}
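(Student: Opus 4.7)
The plan is to use the standard duality $\|\x\| = \sup_{\y \in S^{d-1}} \langle \x, \y \rangle$ together with the defining approximation property of the net, absorbing the approximation error into $\|\x\|$ itself.

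First I would fix $\x \in \R^d$ (WLOG $\x \neq \mathbf{0}$, since the inequality is trivial otherwise) and pick $\y^* \in S^{d-1}$ achieving $\langle \x, \y^* \rangle = \|\x\|$; concretely $\y^* = \x/\|\x\|$. Then, because $\mathcal{N}$ is a $1/2$-net of $S^{d-1}$, there exists some $\y \in \mathcal{N}$ with $\|\y^* - \y\| \leq 1/2$. Writing
\[
\langle \x, \y^* \rangle \;=\; \langle \x, \y \rangle + \langle \x, \y^* - \y \rangle
\]
and applying Cauchy--Schwarz to the second term gives $\langle \x, \y^* - \y \rangle \leq \|\x\| \cdot \|\y^* - \y\| \leq \tfrac{1}{2}\|\x\|$.

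Combining the two bounds yields $\|\x\| \leq \langle \x, \y \rangle + \tfrac{1}{2}\|\x\|$, so $\tfrac{1}{2}\|\x\| \leq \langle \x, \y \rangle \leq \max_{\y' \in \mathcal{N}} \langle \x, \y' \rangle$, which rearranges to the claim. There is essentially no obstacle here---the only thing to be careful about is the direction of the Cauchy--Schwarz bound and the fact that we are controlling the error by a fraction ($1/2$) of $\|\x\|$ itself, which is precisely why a $1/2$-net (as opposed to an arbitrary $\epsilon$-net) produces the clean constant $2$ in the final bound; for a general $\epsilon$-net the same argument would give the factor $1/(1-\epsilon)$.
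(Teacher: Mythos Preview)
Your proof is correct and is precisely the ``standard technique'' the paper alludes to; the paper itself does not spell out a proof of this claim, merely citing \cite{V11}. There is nothing to add or compare.
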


\section{Largest Eigenvalue of Random Kernel Matrices}
In this section, we provide the upper bound on the largest eigenvalue of a random kernel matrix, constructed using polynomial or Gaussian kernels. Notice that the entries of a random kernel matrix are dependent. For example any triplet of entries $(i,j)$, $(j,k)$ and $(k,i)$ are mutually dependent. Additionally, we deal with vectors drawn from general subgaussian distributions, and therefore, the coordinates within a random vector need not be independent. 

We start off with a simple lemma, to bound the Euclidean norm of a  subgaussian random vector. A random vector $\x$ is {\em centered} if $\E[\x] = 0$.
\begin{lemma} \label{lem:normx}
Let $\x_1 \etc \x_n \in \R^d$ be independent centered subgaussian vectors. Then for all $i \in [n]$,  $\Pr [ \| \x_i \| \geq C \sqrt{d}  ] \leq  \exp(-C'd)$ for constants $C,C'$. 
\end{lemma}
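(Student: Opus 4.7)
The plan is to invoke the standard $\epsilon$-net argument that converts a supremum over the unit sphere into a maximum over a finite set, so that a union bound can finish the job. Note first that subgaussianity of each $\x_i$ does not require the coordinates to be independent; the bound must come from the one-dimensional marginals.

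First I would write $\|\x_i\| = \sup_{\y \in S^{d-1}} \langle \x_i, \y \rangle$ and apply Claim \ref{claim:net}: fixing a $1/2$-net $\NN$ of $S^{d-1}$, we have
\[
\|\x_i\| \leq 2 \max_{\y \in \NN} \langle \x_i, \y \rangle.
\]
By Proposition \ref{prop:nets}, $|\NN| \leq 5^d$. So it suffices to bound $\max_{\y \in \NN} \langle \x_i, \y \rangle$ with high probability.

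Next, for each fixed $\y \in S^{d-1}$, the random variable $\langle \x_i, \y \rangle$ is one-dimensional subgaussian with $\psi_2$-norm at most $\|\x_i\|_{\psi_2}$ by Definition \ref{def:subgauss}. Since $\x_i$ is centered and $\|\x_i\|_{\psi_2}$ is bounded by a constant (this is the standing assumption on the $\x_i$'s in the paper; subgaussian vectors have bounded $\psi_2$-norm by hypothesis), the tail estimate from Claim \ref{claim:norm} (applied to the one-dimensional marginal) gives
\[
\Pr\bigl[|\langle \x_i, \y \rangle| > t\bigr] \leq 2\exp(-c t^2),
\]
for some constant $c > 0$ depending only on $\|\x_i\|_{\psi_2}$. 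A union bound over $\y \in \NN$ then yields
\[
\Pr\!\left[\max_{\y \in \NN} |\langle \x_i, \y \rangle| > t \right] \leq 2 \cdot 5^d \exp(-c t^2).
\]
Choosing $t = C_0 \sqrt{d}$ with $C_0$ large enough that $c C_0^2 > \ln 5 + 1$, the right-hand side is at most $\exp(-C' d)$. Combining with the net inequality, $\Pr[\|\x_i\| \geq 2 C_0 \sqrt{d}] \leq \exp(-C' d)$, which is the claim with $C = 2 C_0$.

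The argument has no real obstacle, as all three ingredients (net cardinality, subgaussian tail, net-sup reduction) are directly quoted results stated earlier in the paper; the only care needed is to note that the constant in the tail depends on $\|\x_i\|_{\psi_2}$ which must be uniformly bounded for the statement to make sense (this is implicitly the standing subgaussian assumption). The independence of the $\x_i$'s plays no role here; the lemma is really a statement about each $\x_i$ individually.
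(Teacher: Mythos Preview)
Your proof is correct and follows essentially the same $\epsilon$-net argument as the paper: apply the subgaussian tail bound to each one-dimensional marginal $\langle \x_i,\y\rangle$, union-bound over a $(1/2)$-net of size at most $5^d$, and use Claim~\ref{claim:net} to pass from the net maximum to $\|\x_i\|$. The only differences are the order in which the steps are written and your explicit remark that the $\psi_2$-norm must be uniformly bounded and that independence of the $\x_i$'s is irrelevant, both of which are correct observations.
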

\begin{proof}
To this end, note that since $\x_i$ is a subgaussian vector (from Definition~\ref{def:subgauss})
$$ \Pr \left [ | \langle \x_i, \y \rangle | \geq C \sqrt{d}/2 \right ] \leq 2 \exp(-C_2 d),$$
for constants $C$ and $C_2$, any unit vector $\y \in S^{d-1}$.  Taking the union bound over a $(1/2)$-net ($\mathcal{N}$) in $S^{d-1}$, and using Proposition~\ref{prop:nets} for the size of the nets (which is at most $5^d$ as $\epsilon=1/2$), we get that
$$ \Pr \left [ \max_{\y \in \mathcal{N}} | \langle \x_i, \y \rangle | \geq C \sqrt{d}/2 \right ] \leq  \exp(-C_3 d),$$
From Claim~\ref{claim:net}, we know that $ \| \x_i \| \leq 2 \max_{\y \in \mathcal{N}} \; \langle \x_i, \y \rangle$. Hence, $ \Pr \left [ \| \x_i \| \geq C \sqrt{d}  \right ] \leq  \exp(-C'  d)$.
\end{proof}

\paragraph{Polynomial Kernel.} We now establish the bound on the spectral norm of a polynomial kernel random matrix. We assume $\x_1,\dots,\x_n$ are independent vectors drawn according to a centered subgaussian distribution over $\R^d$. Let $K_p$ denote the kernel matrix obtained using $\x_1,\dots,\x_n$ in a polynomial kernel. Our idea to split the kernel matrix $K_p$ into its diagonal and off-diagonal parts, and then bound the spectral norms of these two matrices separately.  The diagonal part contains independent entries of the form $(a\|\x_i\|^2+b)^p$, and we use Lemma~\ref{lem:normx} to bound its spectral norm. Dealing with the off-diagonal part of $K_p$ is trickier because of the dependence between the entries, and here we bound the spectral norm by its Frobenius norm. We also verify the upper bounds provided in the following theorem by conducting numerical experiments (see Figure~\ref{fig1}).
\begin{theorem} \label{thm:poly}
Let $\x_1 \etc \x_n \in \R^d$ be  independent centered subgaussian vectors. Let $p \in \N$, and let $K_p$ be the $n \times n$ matrix with $(i,j)$th entry $K_{p_{ij}}=(a\langle \x_i,\x_j \rangle + b)^p$. Assume that $n \le \exp(C_1 d)$ for a constant $C_1$. Then there exists constants $C_0, C_0'$ such that
\[ \Pr \left[ \norm{K_p}  \ge  C_0^p |a|^p d^p n + 2^{p+1} |b|^p n \right ] \le \exp(-C_0' d).\]
\end{theorem}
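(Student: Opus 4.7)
The plan is to follow the decomposition hinted at by the authors: write $K_p = D + O$, where $D$ is the diagonal matrix with entries $(a\|\x_i\|^2+b)^p$ and $O = K_p - D$ is the off-diagonal remainder, then estimate $\|D\|$ by the maximum diagonal entry and $\|O\|$ by its Frobenius norm. Both estimates reduce to uniform pointwise control of $|\langle \x_i,\x_j\rangle|$ (for $i\ne j$) and $\|\x_i\|^2$ (for the diagonal), which I will obtain from subgaussian concentration.

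The first preparatory step is to apply Lemma~\ref{lem:normx} and a union bound over $i\in[n]$ to conclude that, with probability at least $1 - n\exp(-C'd) \ge 1-\exp(-c_0 d)$ (using the hypothesis $n\le\exp(C_1d)$), every vector satisfies $\|\x_i\|\le C\sqrt{d}$ simultaneously. Conditional on this event, I will fix $j$ and use the subgaussian property of $\x_i$: for any realization of $\x_j$, the scalar $\langle \x_i,\x_j\rangle = \|\x_j\|\,\langle \x_i,\x_j/\|\x_j\|\rangle$ is subgaussian in $\x_i$ with $\psi_2$-norm at most $\|\x_i\|_{\psi_2}\cdot\|\x_j\|\le K\cdot C\sqrt{d}$. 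Claim~\ref{claim:norm} then gives
\[
\Pr\!\left[\,|\langle \x_i,\x_j\rangle|> C_2 d \,\big|\,\x_j\right] \;\le\; 2\exp(-c C_2^2 d),
\]
and taking a union bound over the $\le n^2\le\exp(2C_1d)$ off-diagonal pairs, I can choose $C_2$ large enough (an absolute constant) so that the complement event has probability at most $\exp(-C_0'd)$. So with the desired probability, $\max_{i\ne j}|\langle \x_i,\x_j\rangle|\le C_2 d$ and $\max_i \|\x_i\|^2\le C^2 d$ hold simultaneously.

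On this good event, the two bounds are routine. For the diagonal part, $\|D\|=\max_i |a\|\x_i\|^2+b|^p\le(|a|C^2 d+|b|)^p\le 2^{p-1}(|a|^p C^{2p}d^p+|b|^p)$. For the off-diagonal part,
\[
\|O\|^2 \;\le\; \|O\|_F^2 \;=\;\sum_{i\ne j}(a\langle \x_i,\x_j\rangle+b)^{2p} \;\le\; n^2\bigl(|a|C_2 d+|b|\bigr)^{2p},
\]
so $\|O\|\le n(|a|C_2 d+|b|)^p \le n\cdot 2^{p-1}(|a|^p C_2^p d^p+|b|^p)$. Adding the two estimates and absorbing the $\|D\|$ contribution into the $n$-multiplied bounds (possible since $n\ge 1$), I obtain $\|K_p\|\le C_0^p|a|^p d^p n+2^{p+1}|b|^p n$ with $C_0:=2\max(C^2,C_2)$ an absolute constant, which is the claimed estimate.

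The genuine obstacle is the second step, the uniform bound on inner products, because it is the one place where dependence between the entries of $K_p$ could bite: each $\x_j$ appears in an entire row and column. My way around this is to expose $\x_j$ first (putting its norm on the good event) and then treat $\langle \x_i,\x_j\rangle$ as a one-dimensional subgaussian functional of the independent vector $\x_i$, at which point Claim~\ref{claim:norm} applies; the hypothesis $n\le\exp(C_1 d)$ is exactly what allows the resulting $n^2$-term union bound to be absorbed into $\exp(-C_0'd)$. The crudeness of bounding $\|O\|$ by $\|O\|_F$ is acceptable because this only loses a factor of $n$, which already appears in the target bound, and the polynomial-kernel target is driven by the sheer size of the entries rather than cancellation.
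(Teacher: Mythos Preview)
Your proposal is correct and follows essentially the same route as the paper: the same diagonal/off-diagonal split, the same crude bound $\|O\|\le\|O\|_F$, and the same conditioning-on-$\x_j$ trick to control the cross terms via one-dimensional subgaussian tails. The only stylistic difference is that you first pass to a single ``good event'' $\{\max_i\|\x_i\|\le C\sqrt d\}$ and then argue pointwise, whereas the paper keeps the expectation over $\x_j$ and splits it according to whether $\|\x_j\|\le C\sqrt d$; since the good event is a product event of probability $1-o(1)$, conditioning on it inflates the subgaussian tails of each $\x_i$ by at most a constant factor, so your version goes through without trouble.
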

\begin{proof}
To prove the theorem, we split the kernel matrix $K_p$ into the diagonal and off-diagonal parts. Let $K_p = D + W$, where $D$ represents the diagonal part of $K_p$ and $W$ the off-diagonal part of $K_p$. Note that
\begin{align*}  \| K_p \| \leq  \| D \| + \| W \| \leq \| D \| + \| W \|_F. \end{align*}
Let us estimate the norm of the diagonal part $D$ first.  From Lemma~\ref{lem:normx}, we know that for all $i \in [n]$ with $C_3=C'$,
$$\Pr \left [ \| \x_i \| \geq C \sqrt{d} \right ]  = \Pr \left [ \| \x_i \|^2 \geq (C \sqrt{d})^2  \right ] \leq  \exp(-C_3  d).$$
Instead of $\| \x \|_i^2$, we are interested in bounding $(a \| \x_i \|^2 + b)^p$. 
\begin{align} \label{eqn:first} \Pr \left [ \| \x_i \|^2 \geq (C  \sqrt{d})^2  \right ] = \Pr \left [ (a \| \x_i \|^2 + b)^p \geq (a (C  \sqrt{d})^2 + b)^p \right ]. \end{align}
Consider $(a(C \sqrt{d})^2 +b)^p$. A simple inequality to bound $(a(C \sqrt{d})^2+ b)^p$ is\footnote{For any $a,b,m \in \R$ and $p \in \N$, $(a\cdot m + b)^p \leq 2^p (|a|^p |m|^p + |b|^p)$.}
$$(a(C \sqrt{d})^2+ b)^p \leq 2^p(|a|^p(C \sqrt{d})^{2p} + |b|^p).$$
Therefore, 
$$ \Pr \left [ (a \| \x_i \|^2 + b)^p \geq 2^p(|a|^p(C  \sqrt{d})^{2p} + |b|^p)  \right ] \leq \Pr \left [ (a \| \x_i \|^2 + b)^p \geq (a (C  \sqrt{d})^2 + b)^p \right ].$$
Using~\eqref{eqn:first} and substituting in the above equation, for any $i \in [n]$
$$ \Pr \left [ (a \| \x_i \|^2 + b)^p \geq 2^p (|a|^p  C^{2p} d^p + |b|^p) \right ] \leq \Pr \left [ \| \x_i \| \geq C \sqrt{d} \right ] \leq \exp(-C_3  d).$$
By applying a union bound over all $n$ non-zero entries in $D$, we get that for all $i \in [n]$
$$ \Pr \left [ (a \| \x_i \|^2 + b)^p \geq 2^p(|a|^p C^{2p} d^p  + |b|^p) \right ] \leq n \cdot \exp(-C_3  d) \leq \exp(C_1 d) \cdot \exp(-C_3  d)  \leq \exp(-C_4 d),$$
as we assumed that $n \le \exp(C_1 d)$. This implies that
\begin{align} \label{eqn:five} \Pr [ \| D \| \geq 2^p(|a|^p C^{2p} d^p + |b|^p)]  \leq \exp(-C_4 d).\end{align}

We now bound the spectral norm of the off-diagonal part $W$ using Frobenius norm as an upper bound on the spectral norm. Firstly note that for any $\y \in \R^d$, the random variable $\langle \x_i, \y \rangle$ is subgaussian with its $\psi_2$-norm at most $C_5 \| \y \|$ for some constant $C_5$.  This follows as:
$$\| \langle \x_i, \y \rangle \|_{\psi_2}  := \inf \left \{t > 0 \,:\, \E[\exp( \langle \x_i, \y \rangle ^2/t^2)] \leq 2 \right \} \leq C_5 \| \y \|.$$
Therefore, for a fixed $\x_j$, $\| \langle \x_i,\x_j \rangle \|_{\psi_2} \leq C_5 \| \x_j \|$. 
For $i \neq j$, conditioning on $\x_j$,
$$ \Pr \left [| \langle \x_i, \x_j \rangle | \geq \tau \right ] = \E_{\x_j} \left [ \Pr \left [ | \langle \x_i, \x_j \rangle | \geq \tau \;|\; \x_j \right ] \right ]. $$
From Claim~\ref{claim:norm}, 
$$ \E_{\x_j} \left [ \Pr \left [ | \langle \x_i, \x_j \rangle | \geq \tau \;|\; \x_j \right ] \right ] \leq \E_{\x_j} \left [ \exp \left ( \frac{-C_6 \tau^2}{\| \langle \x_i,\x_j \rangle \|_{\psi_2}^2} \right ) \right ] \leq \E_{\x_j} \left [ \exp \left ( \frac{-C_6 \tau^2}{(C_5 \| \x_j \|)^2} \right ) \right ] = \E_{\x_j} \left [ \exp \left ( \frac{-C_7 \tau^2}{\| \x_j \|^2} \right ) \right ]  , $$
where the last inequality uses the fact that $\| \langle \x_i,\x_j \rangle \|_{\psi_2} \leq C_5 \| \x_j \|$.
Now let us condition the above expectation on the value of $\| \x_j \|$ based on whether $\| \x_j \| \geq C \sqrt{d}$ or $\| \x_j \|  < C \sqrt{d}$. We can rewrite
\begin{multline*} 
\E_{\x_j} \left [ \frac{-C_7 \tau^2}{\| \x_j \|^2} \right ]  
 \leq \E_{\x_j} \left [ \exp \left ( \frac{-C_7 \tau^2}{C^2 d} \right) \; \Bigg|\; \| \x_j \| < C \sqrt{d} \right ] \Pr [ \| \x_j \| < C \sqrt{d} ] \\ + \E_{\x_j} \left [ \exp \left ( \frac{-C_7 \tau^2}{\| \x_j \|^2} \right ) \:\Bigg|\: \| \x_j \| \geq C \sqrt{d} \right ] \Pr [ \| \x_j \| \geq C \sqrt{d} ].
\end{multline*}
The above equation can be easily be simplified as:
\begin{align*}
\E_{\x_j} \left [ \frac{-C_7 \tau^2}{\| \x_j \|^2} \right ]  & \leq \exp \left ( \frac{-C_8 \tau^2}{d} \right ) + \E_{\x_j} \left [ \exp \left ( \frac{-C_7 \tau^2}{\| \x_j \|^2} \right ) \:\Bigg|\: \| \x_j \| \geq C \sqrt{d} \right ] \Pr [ \| \x_j \| \geq C \sqrt{d} ].
\end{align*}
From Lemma~\ref{lem:normx}, $  \Pr [ \| \x_j \| \geq C \sqrt{d} ] \leq  \exp(-C_3 d)$, and 
$$ \E_{\x_j} \left [ \exp \left ( \frac{-C_7 \tau^2}{\| \x_j \|^2} \right ) \:\Bigg|\: \| \x_j \| \geq C \sqrt{d} \right ] \leq 1.$$
This implies that as $\Pr [ \| \x_j \| \geq C \sqrt{d} ] \leq   \exp(-C_3 d)$),
$$  \E_{\x_j} \left [ \exp \left ( \frac{-C_7 \tau^2}{\| \x_j \|^2} \right ) \:\Bigg|\: \| \x_j \| \geq C \sqrt{d} \right ]  \Pr [ \| \x_j \| \geq C \sqrt{d} ] \leq   \exp(-C_3 d).$$
Putting the above arguments together, 
\begin{align*}
 \Pr \left [| \langle \x_i, \x_j \rangle | \geq \tau \right ]  & = \E_{\x_j} \left [ \Pr \left [ | \langle \x_i, \x_j \rangle | \geq \tau \;|\; \x_j \right ] \right ]  \leq  \exp \left ( \frac{-C_8 \tau^2}{d} \right ) +  \exp(-C_3 d).
\end{align*}
Taking a union bound over all $(n^2-n) < n^2$ non-zero entries in $W$,
$$ \Pr \left [\max_{i \neq j} | \langle \x_i, \x_j \rangle | \geq \tau \right ]  \leq n^2 \left ( \exp \left ( \frac{-C_8 \tau^2}{d} \right ) +  \exp(-C_3 d) \right ).$$
Setting $\tau = C\cdot d$ in the above and using the fact that $n \leq \exp(C_1 d)$, 
\begin{align}
\label{eqn:third} \Pr \left [\max_{i \neq j} | \langle \x_i, \x_j \rangle | \geq C \cdot d \right ] \leq \exp(-C_9 d). \end{align}
We are now ready to bound the Frobenius norm of $W$.
\begin{align*}
\|W\|_F = \left (\sum_{i \neq j} (a \langle \x_i,\x_j \rangle + b)^{2p} \right )^{1/2} \leq  \left ( n^2 2^{2p} \left (|a|^{2p} \langle \x_i,\x_j \rangle^{2p} + |b|^{2p}\right  ) \right)^{1/2} \leq  n 2^p \left (|a|^p | \langle \x_i,\x_j \rangle |^p + |b|^p \right  ). 
\end{align*}
Plugging in the probabilistic bound on $| \langle \x_i, \x_j \rangle |$ from~\eqref{eqn:third} gives,
\begin{align} \label{eqn:six} 
\Pr \left [ \| W \|_F \geq  n 2^p \left (|a|^p |C^p d^p + |b|^p \right  ) \right ] & \leq \Pr \left [ n 2^p \left (|a|^p | \langle \x_i,\x_j \rangle |^p + |b|^p \right  ) \geq   n 2^p \left (|a|^p |C^p d^p + |b|^p \right  ) \right ] \nonumber \\  
& \leq \exp(-C_9 d).
\end{align}
Plugging bounds on $\| D \|$ (from~\eqref{eqn:five}) and $\| W \|_F$ (from~\eqref{eqn:six}) to upper bound $\| K_p \| \leq \|D\| +  \|W\|_F$ yields that there exists constants $C_0$ and $C_0'$ such that,
$$ \Pr \left [ \| K_p \| \geq C_0^p |a|^p d^p n + 2^{p+1} |b|^p n \right ] \leq \Pr \left [  \|D\| +  \|W\|_F \geq C_0^p |a|^p d^p n + 2^{p+1} |b|^p n  \right ] \leq \exp(-C_0' d).$$
This completes the proof of the theorem. The chain of constants can easily be estimated starting with the constant in the definition of the subgaussian random variable.
\end{proof}

\begin{remark}
Note that for our proofs it is only necessary that $\x_1,\dots,\x_n$ are independent random vectors, but they need not be identically distributed.  This spectral norm upper bound on $K_p$ (again with exponentially high probability) could be improved to $$O\left (C_0^p |a|^p(d^p +d^{p/2} n) +2^{p +1}n |b|^p \right),$$ with a slightly more involved analysis (omitted in this extended abstract). For an even $p$, the expectation of every individual entry of the matrix $K_p$ is positive, which provides tight examples for this bound.
\end{remark}

\paragraph{Gaussian Kernel.}  We now establish the bound on the spectral norm of a Gaussian kernel random matrix. Again assume $\x_1,\dots,\x_n$ are independent vectors drawn according to a centered subgaussian distribution over $\R^d$. Let $K_g$ denote the kernel matrix obtained using $\x_1,\dots,\x_n$ in a Gaussian kernel. Here an upper bound of $n$ on the spectral norm on the kernel matrix follows trivially as all entries of $K_g$ are less than equal to $1$.  We show that this bound is tight, in that for small values of $a$, with high probability the spectral norm is at least $\Omega(n)$. 

In fact, it is impossible to obtain better than $O(n)$ upper bound on the spectral norm of $K_g$ without additional assumptions on the subgaussian distribution, as illustrated by this example: Consider a distribution over $\R^d$, such that a random vector drawn from this distribution is a zero vector $(0)^d$ with probability $1/2$ and uniformly distributed over the sphere in $\R^d$ of radius $2\sqrt{d}$ with probability $1/2$. A random vector $\x$ drawn from this distribution is isotropic and subgaussian, but $\Pr[\x=(0)^d]=1/2$. Therefore, in $\x_1,\dots,\x_n$ drawn from this distribution, with high probability more than a constant fraction of the vectors will be $(0)^d$. This means that a proportional number of entries of the matrix $K_g$ will be $1$, and the norm will be $O(n)$ regardless of $a$. 

This situation changes, however, when we add the additional assumption that $\x_1,\dots,\x_n$ have independent centered subgaussian coordinates\footnote{Some of the commonly used subgaussian random vectors such as the standard normal, Bernoulli satisfy this additional assumption.}  (i.e., each $\x_i$ is drawn from a product distribution formed from some $d$ centered univariate subgaussian distributions). In that case, the kernel matrix $K_g$ is a small perturbation of the identity matrix, and we show that the spectral norm of $K_g$ is with high probability bounded by an absolute constant (for $a = \Omega(\log n/d)$).  For this proof, similar to Theorem~\ref{thm:poly}, we split the kernel matrix into its diagonal and off-diagonal parts. The spectral norm of the off-diagonal part is again bounded by its Frobenius norm. We also verify the upper bounds presented in the following theorem by conducting numerical experiments (see Figure~\ref{fig2}).

\begin{theorem} \label{thm:rbf}
Let $\x_1 \etc \x_n \in \R^d$ be independent centered subgaussian vectors. Let $a > 0$, and let $K_g$ be the $n \times n$ matrix with $(i,j)$th entry $K_{g_{ij}} = \exp( -a \| \x_i - \x_j \|^2)$. Then there exists constants $c,c_0,c_0',c_1$ such that
\renewcommand{\theenumi}{\alph{enumi}}
\begin{CompactEnumerate}
\item \label{part:1} $\| K_g \| \leq n$.
\item \label{part:2} If $a < c_1/d$, $\Pr \left [ \| K_g \| \geq c_0 n \right] \geq 1 - \exp(-c_0' n)$.
\item \label{part:3} If all the vectors $\x_1,\dots,\x_n$ satisfy the additional assumption of having independent centered subgaussian coordinates, and assume $n \le \exp(C_1 d)$ for a constant $C_1$. Then for any $\delta > 0$ and $a  \geq (2+\delta)\frac{\log n}{d}$, $\Pr \left [ \| K_g \| \geq 2 \right] \leq \exp(-c \zeta^2 d)$ with $\zeta > 0$ depending only on $\delta$.
\end{CompactEnumerate}
\end{theorem}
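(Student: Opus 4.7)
The plan is to handle the three parts in order. Part (a) follows immediately: $K_g$ is a kernel (hence positive semidefinite) matrix whose every diagonal entry equals $\exp(0)=1$, so its eigenvalues are nonnegative and sum to $\mathrm{tr}(K_g)=n$, giving $\|K_g\|\le n$.

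For part (b), I will extract a large ``short-vector'' subset and restrict. Lemma~\ref{lem:normx} gives $\|\x_i\|\le C\sqrt{d}$ with probability $\ge 1-\exp(-C'd)$, which exceeds $1/2$ once $d$ is larger than an absolute constant. Since these events are independent across $i$, a Chernoff bound yields $|S|\ge n/4$ for $S:=\{i: \|\x_i\|\le C\sqrt{d}\}$ except with probability $\exp(-c_0' n)$. On that event, every pair $i,j\in S$ satisfies $\|\x_i-\x_j\|^2\le 4C^2 d$, so $K_{g_{ij}}\ge\exp(-4C^2 c_1)=:c_0$ whenever $a<c_1/d$. Testing $K_g$ against the unit vector $\mathbf{1}_S/\sqrt{|S|}$ then gives $\|K_g\|\ge c_0|S|\ge (c_0/4)n$, which yields (b) after absorbing the factor $1/4$ into $c_0$.

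For part (c), I split $K_g=\I_n+W$ with $W$ the off-diagonal part; since $\|K_g\|\le 1+\|W\|\le 1+\|W\|_F$, it suffices to show $\|W\|_F\le 1$. The key input is a lower-tail concentration bound for $\|\x_i-\x_j\|^2$. Writing $\y:=\x_i-\x_j$, the coordinates of $\y$ are independent centered subgaussians (from the hypothesis), with variances bounded below by a positive constant under the standard non-degenerate subgaussian normalization, so $\E[\|\y\|^2]\ge \alpha d$ for some $\alpha>0$. The Hanson--Wright inequality applied to the identity quadratic form $\y^\top \y$ (equivalently, a Bernstein bound for the sum of the independent sub-exponential variables $y_k^2$) yields
$$\Pr\bigl[\|\y\|^2\le(1-\eta)\alpha d\bigr]\le\exp(-c\eta^2 d), \qquad \eta\in(0,1).$$
A union bound over the at most $n^2\le \exp(2C_1 d)$ ordered pairs gives that $\|\x_i-\x_j\|^2\ge (1-\eta)\alpha d$ holds simultaneously except on an event of probability $\exp(-c\zeta^2 d)$ for a suitable $\zeta=\zeta(\delta)>0$. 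On this good event, the hypothesis $a\ge (2+\delta)\log n/d$ gives $\exp(-2a\|\x_i-\x_j\|^2)\le n^{-2(2+\delta)(1-\eta)\alpha}$. Choosing $\eta$ small enough (as a function of $\delta$) so that $(2+\delta)(1-\eta)\alpha\ge 1+\beta$ for some $\beta>0$, I conclude $\|W\|_F^2\le n^{-2\beta}\le 1$, hence $\|K_g\|\le 2$.

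The main obstacle is precisely this lower-tail bound on $\|\y\|^2$: the Gaussian-kernel entry is exponentially small only when its argument is bounded below, and one cannot afford even a polynomial loss over the $n^2$ pairs. The coordinate-independence hypothesis is essential here, since the mixture example preceding the theorem shows that without it a constant fraction of the $\x_i$'s can coincide, making the corresponding entries of $K_g$ equal to $1$ and forcing $\|K_g\|=\Theta(n)$ regardless of $a$. With independence in hand Hanson--Wright supplies the required bound in a standard way, and the only delicate point is that $\eta$ must be small enough that $(2+\delta)(1-\eta)\alpha>1$ (to shrink each entry below $n^{-2}$) yet large enough that $c\eta^2$ exceeds the union-bound exponent $2C_1$, which is arranged by choosing $\zeta=\zeta(\delta)$ in a suitable range.
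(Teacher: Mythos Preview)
Your approach is correct and, for parts (a) and (c), essentially matches the paper's. For (a) the paper simply observes that every entry of $K_g$ is at most $1$, which gives $\|K_g\|\le n$ immediately; your trace/PSD argument is an equally short alternative. For (c), both you and the paper split $K_g=\I_n+W$, bound $\|W\|$ by its Frobenius norm, invoke sub-exponential lower-tail concentration for $\|\x_i-\x_j\|^2$ (the paper cites Hoeffding where you cite Hanson--Wright/Bernstein), and conclude via a union bound over the $n^2$ pairs. The normalization caveat you flag---needing the coordinate variances bounded below so that $(2+\delta)(1-\eta)\alpha>1$ is attainable---is implicit in the paper as well (hidden in the line $\Pr[\|\x_i-\x_j\|^2/d\le 1-\zeta]$), and the tension you note at the end between the two constraints on $\eta$ is present in both arguments and is handled through the interplay between $C_1$ and the concentration constant.

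Part (b) is where you take a genuinely different and cleaner route. The paper defines partial row sums $Z_i=\sum_{j>n/2}K_{g_{ij}}$, shows $\Pr[Z_i\ge c_4 n]\ge c_5$ via the Paley--Zygmund inequality (conditioned on the norm event $\mathcal{E}_1$), then conditions on $\x_{n/2+1},\ldots,\x_n$ to make $Z_1,\ldots,Z_{n/2}$ conditionally independent before applying Chernoff and testing a submatrix against the constant vector. You bypass both Paley--Zygmund and the conditioning trick by noting that the events $\{\|\x_i\|\le C\sqrt{d}\}$ are already jointly independent across $i$, so Chernoff applies directly to produce a large index set $S$ on which every entry of the principal submatrix $(K_{g_{ij}})_{i,j\in S}$ is uniformly bounded below by $\exp(-4C^2 c_1)$; testing against $\mathbf{1}_S/\sqrt{|S|}$ then finishes in one line. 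This is strictly simpler and loses nothing.
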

\begin{proof}
Proof of Part~\ref{part:1}) is straightforward as all entries of $K_g$  do not exceed $1$. 

Let us prove the lower estimate for the norm in Part~\ref{part:2}). For $i=1,\dots,n$ define
$$ Z_i = \sum_{j=\frac n 2+1}^n K_{g_{ij}}.$$
From Lemma~\ref{lem:normx} for all $i \in [n]$, $\Pr \left [ \| \x_i \| \geq C \sqrt{d}  \right ] \leq  \exp(-C'd)$. In other words, $\| \x_i\|$ is less than $C \sqrt{d}$ for all $i \in [d]$ with probability at least $1-\exp(-C'd)$. Let us call this event $\mathcal{E}_1$. Under $\mathcal{E}_1$ and assumption $a < c_1/d$, $\E[Z_i] \geq c_2 n$ and $\E[Z_i^2] \leq c_3 n^2$. Therefore, by Paley-Zygmund inequality (under event $\mathcal{E}_1$),
\begin{align}\label{eqn:z} \Pr [Z_i \geq c_4 n] \geq c_5.\end{align}
Now $Z_1,\dots,Z_n$ are not independent random variables. But if we condition on $\x_{n/2+1}, \ldots, \x_n$, then $Z_1,\dots,Z_{n/2}$ become independent (for simplicity, assume that $n$ is divisible by $2$). Thereafter, an application of Chernoff bound on $Z_1,\dots,Z_{n/2}$ using the probability bound from~\eqref{eqn:z} (under conditioning on $\x_{n/2+1}, \ldots, \x_n$ and event $\mathcal{E}_1$) gives:
$$ \Pr \left [Z_i \geq c_4 n \mbox{ for at least } c_5 n \mbox{ entries } Z_i \in  \{Z_1,\dots,Z_{n/2}\} \right ] \geq 1 - \exp(-c_6 n).$$
The first conditioning can be removed by taking the expectation with respect to  $\x_{n/2+1}, \ldots, \x_n$ without disturbing the exponential probability bound.  Similarly, conditioning on event $\mathcal{E}_1$ can also be easily removed.

Let $K_g'$ be the submatrix of $K_g$ consisting of rows $1 \leq i \leq n/2$ and columns $n/2+1 \leq j \leq n$. Note that $\| K_g' \| \geq \u^\top K_g' \u$, where $\u = \left ( \sqrt{\frac 2 n},\dots,\sqrt{\frac 2 n} \right )$ (of dimension $n/2$). Then
\begin{align*}
\Pr[ \| K_g \| \leq c_0 n] & \leq \Pr [ \| K_g' \| \leq c_7 n] \leq \Pr [ \u^\top K_g' \u \leq c_7 n] \\
& \Pr \left [ \frac{2}{n} \sum_{i=1}^{n/2} Z_i \leq c_7 n \right ] \leq \exp(-c_0' n).
\end{align*}
The last line follows as from above arguments with exponentially high probability above more than $\Omega(n)$ entries in $Z_1,\dots,Z_{n/2}$ are greater than $\Omega(n)$, and by readjusting the constants.

Proof of Part~\ref{part:3}): As in Theorem~\ref{thm:poly}, we split the matrix $K_g$ into the diagonal ($D$) and the off-diagonal part ($W$) (i.e., $K_g=D+W$). It is simple to observe that $D=\mathbb{I}_n$, therefore we just concentrate on $W$. The $(i,j)$th entry in $W$ is $\exp( -a \| \x_i - \x_j \|^2)$, where $\x_i$ and $\x_j$ are independent vectors with independent centered subgaussian coordinates. Therefore, we can use Hoeffding's inequality, for fixed $i,j$,
\begin{align} \label{eqn:hoeff}
 \Pr \left[ \exp( -a \| \x_i - \x_j \|^2) \geq \exp(-a(1-\zeta)d) \right ] = \Pr \left [ \frac{\| \x_i - \x_j \|^2}{d} \leq (1-\zeta) \right ] \leq \exp(-c_8 \zeta^2 d),
 \end{align}
where we used the fact that if a random variable is subgaussian then its square is a subexponential random variable~\cite{V11}.\footnote{We call a random variable $x \in \R$ subexponential if there exists  a constant $C > 0$ if $\Pr[ |x| > t]  \leq 2 \exp(-t/C)$ for all $t > 0$. } To estimate the norm of $W$, we bound it by its Frobenius norm. If $a \geq (2+\delta)\frac{\log n}{d}$, then we can choose $\zeta > 0$ depending on $\delta$ such that $n^2 \exp(-a (1-\zeta)d) \leq 1$.  Hence,
\begin{align*}
 \Pr [ \| K_g \| \geq 2 ]  & \leq \Pr [ \| D \| + \| W \|_F \geq 2] = \Pr [ \|W\|_F \geq 1] \\
& = \Pr \left [ \sum_{1 \leq i,j \leq n, i \neq j}  \exp( -a \| \x_i - \x_j \|^2) \geq  1 \right ] \\
&  \leq \Pr \left [ \sum_{1 \leq i,j \leq n, i \neq j}  \exp( -a \| \x_i - \x_j \|^2) \geq  n^2 \exp(-a(1-\zeta)d) \right ] \\
& \leq \Pr \left [ \sum_{1 \leq i,j \leq n}  \exp( -a \| \x_i - \x_j \|^2) \geq  n^2 \exp(-a(1-\zeta)d) \right ] \\
& \leq n^2  \Pr \left[ \max_{1 \leq i,j \leq n} \exp( -a \| \x_i - \x_j \|^2) \geq \exp(-a(1-\zeta)d) \right ] \\
& \leq n^2  \exp(-c_8 \zeta^2 d) \\
&  \leq \exp(-c \zeta^2 d) \mbox{ for some constant } c.
\end{align*}
The first equality follows as $\| D \| =1$, and the second-last inequality follows from~\eqref{eqn:hoeff}. This completes the proof of the theorem. Again the long chain of constants can easily be estimated starting with the constant  in the definition of the subgaussian random variable.
\end{proof}

\begin{remark}
Note that again the $\x_i$'s need not be identically distributed.  Also as mentioned earlier, the analysis in Theorem~\ref{thm:rbf} could easily be extended to other exponential kernels such as the Laplacian kernel.
\end{remark}

\begin{figure*}[htb]
\centering
\subfigure[Polynomial Kernel]{\includegraphics[width=3.2in]{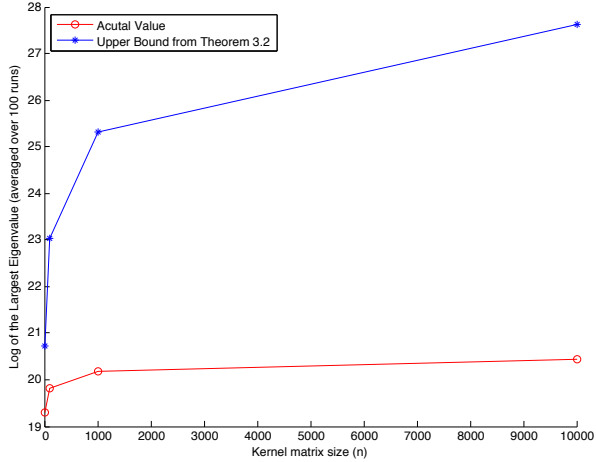} \label{fig1}}
\subfigure[Gaussian Kernel]{\includegraphics[width=3.2in]{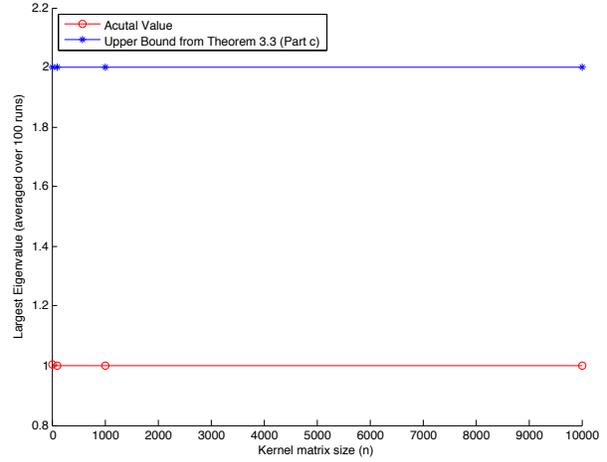} \label{fig2}}
\caption{Largest eigenvalue distribution for random kernel matrices constructed with a polynomial kernel (left plot) and a Gaussian kernel (right plot). The actual value plots are constructed by averaging over 100 runs, and in each run we draw $n$ independent standard Gaussian vectors in $d=100$ dimensions. The predicted values are computed from bounds in Theorems~\ref{thm:poly} and~\ref{thm:rbf} (Part~\ref{part:3}). The kernel matrix size $n$ is varied from $10$ to $10000$ in multiples of $10$. For the polynomial kernel, we set $a=1,b=1$, and $p=4$, and for the Gaussian kernel $a=3\log (n)/d$. Note that our upper bounds are fairly close to the actual results. For the Gaussian kernel, the actual values are very close to $1$. }
\label{datastream}
\end{figure*}

\section{Application: Privately Releasing Kernel Ridge Regression Coefficients}\label{sec:private}
We consider an application of Theorems~\ref{thm:poly} and~\ref{thm:rbf} to obtain noise lower bounds for privately releasing coefficients of kernel ridge regression. For privacy violation, we consider a generalization of \emph{blatant non-privacy}~\cite{DiNi03} referred to as attribute non-privacy (formalized in~\cite{KRSU10}). Consider a database $D \in \R^{n \times d+1}$ that contains, for each individual $i$, a sensitive attribute $y_i\in\{0,1\}$ as well as some other information $\x_i \in \R^d$ which is assumed to be known to the attacker. The $i$th record is thus $(\x_i,y_i)$. Let $X \in \R^{n\times d}$ be a matrix whose $i$th row is $\x_i$, and let $\y = (y_1,\dots,y_n)$. We denote the entire database $D=(X|\y)$ where $|$ represents vertical concatenation. Given some released information $\rho$, the attacker constructs an estimate $\hat\y$ that she hopes is close to $\y$.  We measure the attack's success in terms of the Hamming distance $d_H(\y,\hat\y)$. A scheme is \emph{not} attribute private if an attacker can consistently get an estimate that is within distance $o(n)$. Formally:

\begin{definition}[Failure of Attribute Privacy~\cite{KRSU10}] \label{def:apy} 
A (randomized) mechanism $\mathcal{M} \,:\, \R^{n \times d+1} \to \R^l$ is said to allow $(\theta,\gamma)$ attribute reconstruction if there exists a setting of the nonsensitive attributes $X \in \R^{n \times d}$ and an algorithm (adversary) $\AAA \::\; \R^{n \times d} \times \R^l \to \R^n$ such that for every $\y \in \{0,1\}^n$,
\begin{align*}
\Pr_{\rho \gets \mathcal{M}((X|\y))} [\AAA(X,\rho) = \hat{\y} \,: \, d_H(\y,\hat{\y}) \leq \theta  ] \geq 1 - \gamma.
\end{align*}
\end{definition}
Asymptotically, we say that a mechanism is \emph{attribute nonprivate} if there is an infinite sequence of $n$ for which $\mathcal M$ allows $(o(n), o(1))$-reconstruction. Here $d=d(n)$ is a function of $n$.  We say the attack $\AAA$ is \emph{efficient} if it runs in time $\mbox{poly}(n,d)$. 

\paragraph{Kernel Ridge Regression Background.} One of the most basic regression formulation is that of ridge regression~\cite{hoerl1970ridge}. Suppose that we are given a dataset $\{(\x_i,y_i)\}_{i=1}^n$ consisting of $n$ points with $\x_i \in \R^d$ and $y_i \in \R$. Here $\x_i$'s are referred to as the {\em regressors} and $y_i$'s are the {\em response variables}. In linear regression the task is to find a linear function that models the dependencies between $\x_i$'s and the $y_i$'s. A common way to prevent overfitting in linear regression is by adding a penalty regularization term (also known as {\em shrinkage} in statistics). In kernel ridge regression~\cite{saunders1998ridge}, we assume a model of form $y = f(\x) + \xi$, where we are trying to estimate the regression function $f$ and $\xi$ is some unknown vector that accounts for discrepancy between the actual response ($y$) and predicted outcome ($f(\x)$). Given a reproducing kernel Hilbert space $\HHH$ with kernel $\kappa$, the goal of ridge regression kernel ridge regression is to estimate the unknown function $f^\star$ such the least-squares loss defined over the dataset with a weighted penalty based on the squared Hilbert norm is minimized.
\begin{align} \label{eqn:krr}
\mbox{Kernel Ridge Regression:}\;\;\;\;\;\;\;\;\;\; \mbox{argmin}_{f \in \HHH} \; \left ( \frac{1}{n} \sum_{i=1}^n (y_i - f(\x_i))^2 + \lambda \| f \|_{\HHH}^2 \right ),
\end{align}
where $\lambda > 0$ is a regularization parameter. By representer theorem~\cite{scholkopf2001generalized}, any solution $f^\star$ for~\eqref{eqn:krr}, takes the form 
\begin{align} \label{eqn:fast}
f^\star(\cdot) = \sum_{i=1}^n \alpha_i \kappa(\cdot,\x_i),
\end{align}
where $\alpha=(\alpha_1,\dots,\alpha_n)$ is known as the kernel ridge regression coefficient vector. Plugging this representation into~\eqref{eqn:krr} and solving the resulting optimization problem (in terms of $\alpha$ now), we get that the minimum value is achieved for $\alpha=\alpha^\star$, where
\begin{align} \label{eqn:alpha}
\alpha^\star = (K + \lambda \mathbb{I}_n)^{-1} \y, \mbox{ where } K \mbox{ is the kernel matrix with $K_{ij}=\kappa(\x_i,\x_j)$ and } \y=(y_1,\dots,y_n).
\end{align}
Plugging this $\alpha^\star$ from~\eqref{eqn:alpha} in to~\eqref{eqn:fast}, gives the final form for estimate $f^\star(\cdot)$. This means that for a new point $\x \in \R^d$, the predicted response is $f^{\star}(\x) = \sum_{i=1}^n \alpha_i^\star \kappa(\x,\x_i)$ where $\alpha^\star = (K + \lambda \mathbb{I}_n)^{-1} \y$ and $\alpha^\star=(\alpha^\star_1,\dots,\alpha^\star_n)$. Therefore, knowledge of $\alpha^\star$ and $\x_1,\dots,\x_n$ suffices for using the regression model for making future predictions. 

If $K$ is constructed using a polynomial kernel (defined in~\resref{poly}) then the above procedure is referred to as the {\em polynomial kernel ridge regression}, and similarly if $K$ is constructed using a Gaussian kernel (defined in~\resref{rbf}) then the above procedure is referred to as the {\em Gaussian kernel ridge regression}.

\paragraph{Reconstruction Attack from Noisy $\alpha^\ast$.} Algorithm~\ref{alg:krr} outlines the attack. The privacy mechanism releases a noisy approximation to $\alpha^\star$. Let $\tilde{\alpha}$ be this noisy approximation, i.e., $\tilde{\alpha} = \alpha^\star + \e$ where $\e$ is some unknown noise vector. The adversary tries to reconstruct an approximation $\hat{\y}$  of $\y$ from $\tilde{\alpha}$. The adversary solves the following $\ell_2$-minimization problem to construct~$\hat{\y}$:
\begin{align} \label{eqn:l2} \mbox{min}_{\z \in \R^n} \| \tilde{\alpha} - (K + \lambda \mathbb{I}_n)^{-1} \z \| .\end{align}
In the setting of attribute privacy, the database $D=(X|\y)$. Let $\x_1,\dots,\x_n$ be the rows of $X$, using which the adversary can construct $K$ to carry out the attack. Since  the matrix $K + \lambda \mathbb{I}_n$ is invertible for $\lambda > 0$ as $K$ is a positive semidefinite matrix, the solution to~\eqref{eqn:l2} is simply $\z=(K + \lambda \mathbb{I}_n) \tilde{\alpha}$, element-wise rounding of which to closest $0,1$ gives $\hat{\y}$.
\begin{algorithm}[h]
\caption{Reconstruction Attack from Noisy Kernel Ridge Regression Coefficients}
\label{alg:krr}
\mbox{\textbf{Input:} Public information $X \in \R^{n \times d}$, regularization parameter $\lambda$, and $\tilde{\alpha}$ (noisy version of $\alpha^\star$ defined in~\eqref{eqn:alpha}).}
\begin{algorithmic}[1]
\STATE Let $\x_1,\dots,\x_n$ be the rows of $X$, construct the kernel matrix $K$ with $K_{ij}=\kappa(\x_i,\x_j)$
\STATE {\textbf Return} $\hat{\y} = (\hat{y}_1,\dots,\hat{y}_n)$ defined as follows:
\begin{align*}
\hat{y}_i = \left\{ \begin{array}{rl}
0 &\mbox{ if $i$th entry in $(K + \lambda \mathbb{I}_n) \tilde{\alpha}< 1/2$} \\
1 &\mbox{ otherwise}
\end{array} \right.
\end{align*}
\end{algorithmic}
\end{algorithm}

\begin{lemma} \label{lem:error}
Let $\tilde{\alpha} = \alpha^\star + \e$, where $\e \in \R^n$ is some unknown (noise) vector. If $\| \e \|_{\infty} \leq \beta$ (absolute value of all entries in $\e$ is less than $\beta$),  then $\hat{\y}$ returned by Algorithm~\ref{alg:krr} satisfies, $d_H(\y,\hat{\y}) \leq 4 (K+\lambda)^2\beta^2 n$. In particular, if $\beta = o \left (\frac{1}{\| K \| + \lambda} \right )$, then $d_H(\y,\hat{\y}) = o(n)$.
\end{lemma}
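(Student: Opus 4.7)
The plan is to directly relate the Hamming error of the reconstruction to the size of the perturbation passed through the matrix $K+\lambda\mathbb{I}_n$. The key identity to exploit is that $(K+\lambda\mathbb{I}_n)\alpha^\star = \y$, so that
\[
(K+\lambda\mathbb{I}_n)\tilde\alpha \;=\; \y \;+\; (K+\lambda\mathbb{I}_n)\e.
\]
Let $\e' := (K+\lambda\mathbb{I}_n)\e$. Algorithm~\ref{alg:krr} rounds the $i$th coordinate of $\y+\e'$ to $\{0,1\}$, so a coordinate is misclassified only when $|e'_i|\ge 1/2$. Hence $d_H(\y,\hat\y)$ is bounded by the number of coordinates of $\e'$ of magnitude at least $1/2$, and by a counting (Markov-style) argument this number is at most $4\|\e'\|^2$.

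The remaining step is to bound $\|\e'\|$ in terms of $\beta$. First I would use operator-norm submultiplicativity:
\[
\|\e'\| \;=\; \|(K+\lambda\mathbb{I}_n)\e\| \;\le\; \|K+\lambda\mathbb{I}_n\|\cdot\|\e\| \;\le\; (\|K\|+\lambda)\,\|\e\|,
\]
using that $K$ is positive semidefinite so $\|K+\lambda\mathbb{I}_n\|=\|K\|+\lambda$. Then the $\ell_\infty$-to-$\ell_2$ conversion gives $\|\e\|^2 \le n\|\e\|_\infty^2 \le n\beta^2$. Combining yields
\[
d_H(\y,\hat\y) \;\le\; 4\|\e'\|^2 \;\le\; 4(\|K\|+\lambda)^2 n\beta^2,
\]
which matches the claimed bound (reading the statement's $(K+\lambda)^2$ as $(\|K\|+\lambda)^2$). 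The ``in particular'' assertion is then immediate: if $\beta=o(1/(\|K\|+\lambda))$, the product $(\|K\|+\lambda)^2\beta^2 = o(1)$, so $d_H(\y,\hat\y)=o(n)$.

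There is no real obstacle here; the proof is a short linear-algebra calculation. The only point requiring a little care is the Markov-style counting: the number of coordinates of $\e'$ with $|e'_i|\ge 1/2$ is at most $\|\e'\|^2/(1/2)^2 = 4\|\e'\|^2$, since each such coordinate contributes at least $1/4$ to $\|\e'\|^2$. The quantitative strength of the lemma (and the eventual privacy lower bound) is entirely controlled by the spectral-norm bounds on $K$ proven in Theorems~\ref{thm:poly} and~\ref{thm:rbf}, which is what ties this lemma to the rest of the paper.
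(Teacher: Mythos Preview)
Your proof is correct and follows essentially the same approach as the paper: both use the identity $(K+\lambda\mathbb{I}_n)\tilde\alpha=\y+(K+\lambda\mathbb{I}_n)\e$, bound $\|(K+\lambda\mathbb{I}_n)\e\|$ by $(\|K\|+\lambda)\beta\sqrt{n}$, and then use the counting argument that at most $4\|\e'\|^2$ coordinates of $\e'$ can exceed $1/2$ in absolute value. Your write-up is in fact slightly more explicit than the paper's about the Markov-style step, but the logic is identical.
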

\begin{proof}
Since $\alpha^\star = (K + \lambda \mathbb{I}_n)^{-1} \y$, $\tilde{\alpha} = (K + \lambda \mathbb{I}_n)^{-1} \y + \e$. Now multiplying $(K + \lambda \mathbb{I}_n)$ on both sides gives,
$$ (K + \lambda \mathbb{I}_n) \tilde{\alpha} = \y + (K + \lambda \mathbb{I}_n) \e. $$
Concentrate on $\|  (K + \lambda \mathbb{I}_n) \e \|$. This can be bound as
$$\|  (K + \lambda \mathbb{I}_n) \e \| \le \|  (K + \lambda \mathbb{I}_n) \| \| \e \| = (\| K \| + \lambda) \| \e \|.$$
If the absolute value of all the entries in $\e$ are less than $\beta$ then $\| \e \| \leq \beta \sqrt{n}$. A simple manipulation then shows that if the above hold then $(K + \lambda \mathbb{I}_n) \e$ cannot have more than $4 ( \| K\|+\lambda)^2\beta^2 n$ entries with absolute value above $1/2$. Since $\hat{\y}$ and $\y$ only differ in those entries where $(K + \lambda \mathbb{I}_n) \e$ is greater than $1/2$, it follows that $d_H(\y,\hat{\y}) \leq 4 (\| K \|+\lambda)^2\beta^2 n$. Setting $\beta = o(\frac{1}{\| K \| + \lambda})$ implies $d_H(\y,\hat{\y}) = o(n)$.
\end{proof}

For a privacy mechanism to be attribute non-private, the adversary has to be able reconstruct an $1-o(1)$ fraction of $\y$ with high probability. Using the above lemma, and the different bounds on $\| K \|$ established in Theorems~\ref{thm:poly} and~\ref{thm:rbf}, we get the following lower bounds for privately releasing kernel ridge regression coefficients.
\renewcommand{\theenumi}{\arabic{enumi}}
\begin{proposition} \label{prop:main}
\begin{CompactEn}
\item \label{parta} Any privacy mechanism which for every database $D=(X|\y)$ where $X \in \R^{n \times d}$ and $\y \in \{0,1\}^n$ releases the coefficient vector of a polynomial kennel ridge regression model (for constants $a, b,$ and $p$)  fitted between $X$ (matrix of regressor values) and $\y$ (response vector), by adding $o(\frac{1}{d^p n+ \lambda})$ noise to each coordinate is attribute non-private. The attack that achieves this attribute privacy violation operates in $O(dn^2)$ time.
\item \label{partb} Any privacy mechanism which for every database $D=(X|\y)$ where $X \in \R^{n \times d}$ and $\y \in \{0,1\}^n$ releases the coefficient vector of a Gaussian kennel ridge regression model (for constant $a$) fitted between $X$ (matrix of regressor values) and $\y$ (response vector), by adding $o(\frac{1}{2 + \lambda})$ noise to each coordinate is attribute non-private. The attack that achieves this attribute privacy violation operates in $O(d n^2)$ time.
\end{CompactEn}
\end{proposition}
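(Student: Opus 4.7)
The overall plan is to reduce the proposition to an application of Lemma \ref{lem:error}, which says that if the coordinate-wise noise in the released approximation $\tilde\alpha$ is $o(1/(\|K\|+\lambda))$, then Algorithm \ref{alg:krr} reconstructs a $\hat\y$ with $d_H(\y,\hat\y)=o(n)$, violating attribute privacy in the sense of Definition \ref{def:apy}. Thus it suffices to exhibit a distribution on the regressor matrix $X\in\R^{n\times d}$ for which the relevant kernel matrix has a suitably small spectral norm with high probability, and then fix a single realization of $X$ attaining this bound. Note that because $\|K\|$ depends only on $X$ and not on $\y$, once such an $X$ is fixed the reconstruction guarantee applies uniformly over every $\y\in\{0,1\}^n$, as demanded by Definition \ref{def:apy}.

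For Part \ref{parta}, draw the rows $\x_1,\dots,\x_n$ of $X$ independently from a centered subgaussian distribution (e.g.\ the standard Gaussian on $\R^d$). Then Theorem \ref{thm:poly}, applied with $a,b,p$ constant, yields
\[
\|K_p\| \le C_0^p|a|^p d^p n + 2^{p+1}|b|^p n \;=\; O(d^p n)
\]
with probability at least $1-\exp(-C_0' d)$. Fix any realization of $X$ attaining this bound; then $\|K_p\|+\lambda = O(d^p n+\lambda)$, so $o(1/(d^p n+\lambda))$ coordinate-wise noise on $\alpha^\star$ is $o(1/(\|K_p\|+\lambda))$, and Lemma \ref{lem:error} completes the argument. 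For Part \ref{partb}, choose $\x_1,\dots,\x_n$ to have independent centered subgaussian coordinates so as to satisfy the hypotheses of Theorem \ref{thm:rbf}\ref{part:3}; with $a$ a (sufficiently large) constant in the relevant regime $d=\Omega(\log n)$ where $a\ge (2+\delta)\log n/d$, that theorem gives $\|K_g\|\le 2$ with probability $\ge 1-\exp(-c\zeta^2 d)$. Again, fix a realization attaining the bound and invoke Lemma \ref{lem:error} with $\|K_g\|+\lambda\le 2+\lambda$.

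The running-time claim follows because Algorithm \ref{alg:krr} only (i) forms the $n\times n$ kernel matrix $K$, whose entries are constant-time functions of the inner products $\langle \x_i,\x_j\rangle$ (so $O(dn^2)$ arithmetic operations), and (ii) computes $(K+\lambda\mathbb{I}_n)\tilde\alpha$ followed by coordinate-wise rounding, which costs $O(n^2)$. I do not expect a substantive obstacle in this proof: the two ingredients are already in hand, and the only mild subtlety is the "universal $X$" quantifier in Definition \ref{def:apy}, which is handled trivially by observing that the spectral-norm events bounding $\|K\|$ are functions of $X$ alone.
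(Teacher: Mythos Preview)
Your proposal is correct and follows essentially the same approach as the paper: instantiate $X$ with independent (product, in Part~\ref{partb}) subgaussian rows, invoke the spectral-norm bounds of Theorems~\ref{thm:poly} and~\ref{thm:rbf}\ref{part:3}, and combine with Lemma~\ref{lem:error}; the $O(dn^2)$ runtime comes from forming $K$. Your write-up is slightly more explicit than the paper's in spelling out why the spectral-norm event depends only on $X$ (so one can fix a good realization and quantify over all $\y$), and in noting the regime $d=\Omega(\log n)$ needed so that a constant $a$ meets the hypothesis $a\ge(2+\delta)\log n/d$ of Theorem~\ref{thm:rbf}\ref{part:3}.
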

\begin{proof}
For Part~\ref{parta}, draw each individual $i$'s non-sensitive attribute vector $\x_i$ independently from any $d$-dimensional subgaussian distribution, and use Lemma~\ref{lem:error} in conjunction with Theorem~\ref{thm:poly}. 

For Part~\ref{partb}, draw each individual $i$'s non-sensitive attribute vector $\x_i$ independently from any product distribution formed from some $d$ centered univariate subgaussian distributions, and use Lemma~\ref{lem:error} in conjunction with Theorem~\ref{thm:rbf} (Part~\ref{part:3}).\!\footnote{Note that it is not critical for $\x_i$'s to be drawn from a product distribution. It is possible to analyze the attack even under a (weaker) assumption that each individual $i$'s non-sensitive attribute vector $\x_i$ is drawn independently from a $d$-dimensional subgaussian distribution, by using Lemma~\ref{lem:error} in conjunction with Theorem~\ref{thm:rbf} (Part~\ref{part:1}).}

The time needed to construct the kernel matrix $K$ is $O(d n^2)$, which dominates the overall computation time.
\end{proof}

We can ask how the above distortion needed for privacy compares to typical entries in $\alpha^\star$. The answer is not simple, but there are natural settings of inputs, where the noise needed for privacy becomes comparable with coordinates of $\alpha^\star$, implying that the privacy comes at a steep price. One such example is if the $\x_i$'s are drawn from the standard normal distribution, $\y=(1)^n$, and all other kernel parameters are constant, then the expected value of the corresponding $\alpha^\star$ coordinates match the noise bounds obtained in Proposition~\ref{prop:main}.

Note that Proposition~\ref{prop:main} makes no assumptions on the dimension $d$ of the data, and holds for all values of $n,d$.  This is different from all other previous lower bounds for attribute privacy~\cite{KRSU10,De11,KRS13}, all of which require $d$ to be comparable to $n$, thereby holding only either when the non-sensitive data (the $\x_i$'s) are very high-dimensional or for very small $n$. Also all the previous lower bound analyses~\cite{KRSU10,De11,KRS13} critically rely on the fact that the individual coordinates of each of the $\x_i$'s are independent\footnote{This may not be a realistic assumption in many practical scenarios. For example, an individual's salary and postal address code are correlated and not independent.}, which is not essential for Proposition~\ref{prop:main}.


\paragraph{Note on $\ell_1$-reconstruction Attacks.} A natural alternative to~\eqref{eqn:l2} is to use $\ell_1$-minimization (also known as ``LP decoding''). This gives rise to the following linear program:
\begin{align} \label{eqn:l1} \mbox{min}_{\z \in \R^n} \| \tilde{\alpha} - (K + \lambda \mathbb{I}_n)^{-1} \z \|_1. \end{align}
In the context of privacy, the $\ell_1$-minimization approach was first proposed by Dwork \emph{et al.}\ \cite{DMT07}, and recently reanalyzed in different contexts by~\cite{De11,KRS13}. These results have shown that, for some settings, the $\ell_1$-minimization can handle considerably more complex noise patterns than the $\ell_2$-minimization. However, in our setting, since the solutions for~\eqref{eqn:l1} and~\eqref{eqn:l2} are exactly the same ($\z=(K + \lambda \mathbb{I}_n)\tilde{\alpha}$), there is no inherent advantage of using the $\ell_1$-minimization.

\subsection*{Acknowledgements}
We are grateful for helpful initial discussions with Adam Smith and Ambuj Tewari.

 \end{document}